\documentclass{article}
\usepackage{amsmath, amssymb, amsthm}
\usepackage{bbm}
\usepackage{bm}
\usepackage{graphicx}
\usepackage[colorlinks=true, urlcolor=blue, linkcolor=red, citecolor=green]{hyperref}
\usepackage{booktabs}
\usepackage{xcolor}
\usepackage{microtype}
\usepackage{algorithm}
\usepackage{algpseudocode}

\newtheorem{theorem}{Theorem}
\newtheorem{lemma}[theorem]{Lemma}

\newtheorem{definition}[theorem]{Definition}

\theoremstyle{definition}

\title{On the Complexity of Optimal Graph Rewiring for Oversmoothing and Oversquashing in Graph Neural Networks}

\author{Mostafa Haghir Chehreghani \\
	Department of Computer Engineering \\
	Amirkabir University of Technology (Tehran Polytechnic) \\
	Tehran, Iran \\ mostafa.chehreghani@aut.ac.ir}

\date{}

\begin{document}
	
	\maketitle
	
	\begin{abstract}
		Graph Neural Networks (GNNs) face two fundamental challenges when scaled to deep architectures: oversmoothing, where node representations converge to indistinguishable vectors, and oversquashing, where information from distant nodes fails to propagate through bottlenecks. Both phenomena are intimately tied to the underlying graph structure, raising a natural question: can we optimize the graph topology to mitigate these issues? This paper provides a theoretical investigation of the computational complexity of such graph structure optimization. We formulate oversmoothing and oversquashing mitigation as graph optimization problems based on spectral gap and conductance, respectively. We prove that exact optimization for either problem is NP‑hard through reductions from Minimum Bisection, establishing NP‑completeness of the decision versions. Our results provide theoretical foundations for understanding the fundamental limits of graph rewiring for GNN optimization and justify the use of approximation algorithms and heuristic methods in practice.
	\end{abstract}
	
	\paragraph{Keywords:}
	Graph Neural Networks, oversmoothing, oversquashing, graph rewiring, NP‑hardness, spectral graph theory.
	
	\section{Introduction}
	
	Graph Neural Networks (GNNs) \cite{kipf2016semi, velivckovic2017graph,DBLP:journals/tjs/ZohrabiSC24,DBLP:journals/tweb/NasrabadiKZC25} have become the de facto standard for learning on graph-structured data, achieving remarkable success in applications ranging from fake news detection \cite{DBLP:journals/kbs/LakzaeiCB25,DBLP:journals/asc/LakzaeiCB25} to
	drug discovery \cite{Huang2024TxGNN} and recommendation systems \cite{DBLP:journals/ijon/GholinejadC26}. Their core operation, message passing, aggregates information from neighboring nodes, allowing the model to capture relational structure. However, as GNNs are scaled to deeper architectures, they suffer from two fundamental limitations that severely restrict their expressive power and performance: oversmoothing and oversquashing.
	
	\textbf{Oversmoothing} occurs when repeated message passing causes node representations to converge to a uniform vector, erasing discriminative information \cite{li2018deeper, oono2019graph,DBLP:journals/kbs/HoseinniaGC25}. This phenomenon arises because GNN layers act as low‑pass filters on the graph; after many layers, the features become dominated by the principal eigenvector of the propagation matrix. The rate of oversmoothing is governed by the spectral gap of the graph’s propagation operator, and it has been shown that for any graph with a fixed spectrum, oversmoothing is inevitable as the number of layers grows \cite{oono2019graph,DBLP:journals/kbs/HoseinniaGC25}. Consequently, many modern GNN architectures incorporate skip connections or normalization to mitigate the effect, but the underlying graph structure remains a key determinant.
	
	\textbf{Oversquashing} describes the failure of information from distant nodes to reach a given node due to bottlenecks in the graph \cite{alon2021bottleneck,DBLP:journals/asc/MohamadiC25}. When messages must flow through a narrow cut, the message‑passing process compresses exponentially many messages into fixed‑size vectors, leading to information loss. Oversquashing is fundamentally a structural phenomenon: it is quantified by the graph’s conductance or effective resistance \cite{to2021understanding, nguyen2023understanding}. A graph with a bottleneck forces information to be squashed, no matter how many layers are used.
	
	Crucially, both oversmoothing and oversquashing are determined by the underlying graph topology, yet they often conflict. Graphs that mitigate oversquashing — such as expanders or dense graphs — tend to have high connectivity, which accelerates oversmoothing. Conversely, sparse, tree‑like graphs resist oversmoothing but create severe bottlenecks that cause oversquashing. This tension raises a natural question: can we optimize the graph structure itself to strike an optimal balance between these two opposing forces?
	
	Recent works have explored graph rewiring as a practical tool to improve GNN performance. Heuristic methods such as DIGL \cite{gasteiger2019diffusion} and SDRF \cite{to2021understanding} iteratively add or remove edges based on diffusion or curvature, demonstrating empirical success. However, the fundamental computational complexity of such graph optimization problems has remained unexplored. Is it possible to efficiently find a graph that optimally mitigates oversmoothing or oversquashing? Or are these problems inherently intractable, justifying the use of approximations and heuristics?
	
	\subsection{Contributions}
	
	This paper provides the first theoretical investigation of the complexity of graph structure optimization for GNN oversmoothing and oversquashing mitigation. Our main contributions are as follows.
	
	\begin{enumerate}
		\item \textbf{Unified Formulations:} We cast the mitigation of oversmoothing and oversquashing as graph optimization problems with natural objective functions. For oversmoothing we use the second eigenvalue of the propagation matrix (equivalently the spectral gap of the normalized Laplacian). For oversquashing we use conductance, which captures the worst‑case bottleneck.
		
		\item \textbf{NP‑Hardness Results:} We prove that exact optimization for either problem is NP‑hard. For oversquashing we give a reduction from Minimum Bisection to deciding whether a graph can be modified (by a limited number of edge additions/deletions) to achieve conductance above a threshold. For oversmoothing we give a similar reduction using spectral gap. In both cases we establish NP‑completeness of the decision versions.
		
		\item \textbf{Practical Implications:} We analyze the complexity of existing heuristic rewiring methods such as SDRF and DIGL, showing that they run in polynomial time. Our hardness results therefore provide a theoretical justification for their use: since exact optimization is intractable, heuristic approaches are not only pragmatically necessary but also theoretically well‑founded.
	\end{enumerate}
	
	\subsection{Related Work}
	
	This work sits at the intersection of graph neural network theory, graph structure optimization, and computational complexity. We review the most relevant strands of research below.
	
	\textbf{Oversmoothing in GNNs.}
	The oversmoothing phenomenon was first identified in the context of graph convolutional networks by \cite{li2018deeper}, who observed that stacking many layers leads to uniform node representations. \cite{oono2019graph} provided a spectral analysis showing that the convergence rate is governed by the second eigenvalue of the propagation matrix, and that oversmoothing is inevitable for any GNN that repeatedly applies a linear filter. Subsequent works proposed various mitigation strategies, such as residual connections \cite{kipf2016semi}, normalization layers \cite{hamilton2017inductive}, and adaptive propagation schemes \cite{chen2020simple}. From a spectral perspective, \cite{wang2022understanding} linked oversmoothing to the graph's spectral gap and showed that graph rewiring can delay the effect. Our work complements these by formalizing oversmoothing minimization as a graph optimization problem and proving its NP-hardness.
	
	\textbf{Oversquashing and graph bottlenecks.}
	The term ``oversquashing'' was introduced by \cite{alon2021bottleneck}, who demonstrated that bottlenecks in the graph cause exponential compression of information in message passing. \cite{to2021understanding} used effective resistance to quantify oversquashing and proposed the SDRF algorithm that adds edges with high effective resistance and removes edges with negative curvature. \cite{nguyen2023understanding} further analyzed oversquashing through the lens of Ollivier-Ricci curvature. These works established the importance of graph structure for mitigating oversquashing, but they did not explore the computational complexity of optimizing the graph itself. Our paper fills this gap by showing that even deciding whether a given graph can be improved to reduce oversquashing (as measured by conductance) is NP-hard.
	
	\textbf{Graph rewiring and structure learning for GNNs.}
	Several heuristics have been proposed to augment or rewire the graph for better GNN performance. DIGL \cite{gasteiger2019diffusion} uses graph diffusion to densify the graph, while SDRF \cite{to2021understanding} iteratively applies discrete Ricci flow. \cite{franceschi2019learning} introduced a bilevel optimization framework for learning the graph structure end-to-end. These methods are empirically effective but lack theoretical guarantees on the hardness of the underlying optimization. Our complexity results justify the use of approximation algorithms and heuristics, because exact optimization is intractable.
	
	\textbf{Complexity of graph optimization problems.}
	Our reductions rely on the classical NP-complete problem Minimum Bisection \cite{garey1976some}. Expander embeddings, which we use to relate graph cuts, are a standard tool in hardness of approximation \cite{hoory2006expander, arora2009expander}. Our work extends these classic results to the specific objective functions arising from GNN analysis.
	
	In summary, while previous work has identified oversmoothing and oversquashing as critical limitations and proposed heuristic rewiring strategies, our paper is the first to provide a unified complexity analysis of the underlying graph optimization problems, establishing NP-hardness for natural formulations and justifying the use of approximations in practice.
	
	\section{Preliminaries}
	
	\subsection{Basic Graph Notation}
	
	Let $G = (V, E)$ be an undirected, unweighted graph with $n = |V|$ vertices and $m = |E|$ edges. Let $A \in \{0,1\}^{n \times n}$ be the adjacency matrix, $D = \operatorname{diag}(d_1, \ldots, d_n)$ the degree matrix where $d_v = \sum_{u} A_{uv}$. Let $L = D - A$ denote the combinatorial Laplacian.
	
	\subsection{Spectral Graph Theory and Conductance}
	
	The normalized Laplacian is defined as $\mathcal{L} = I - D^{-1/2} A D^{-1/2}$. Its eigenvalues satisfy $0 = \lambda_1(\mathcal{L}) \leq \lambda_2(\mathcal{L}) \leq \cdots \leq \lambda_n(\mathcal{L}) \leq 2$, and $\lambda_2(\mathcal{L})$ (the Fiedler value) measures the algebraic connectivity of the graph. For a GNN layer we use the symmetric normalized propagation matrix $P = \tilde{D}^{-1/2}\tilde{A}\tilde{D}^{-1/2}$, where $\tilde{A} = A + I$ and $\tilde{D} = D + I$. Its eigenvalues are $1 = \mu_1(P) \geq \mu_2(P) \geq \cdots \geq \mu_n(P) \geq -1$. For regular graphs, $P = D^{-1/2} A D^{-1/2}$.
	
	Conductance (or Cheeger constant) is a classic measure of bottlenecks. For a subset $S \subset V$, define $\partial S = \{(u,v) \in E : u \in S, v \notin S\}$ and $\operatorname{vol}(S) = \sum_{v \in S} \deg(v)$.
	The conductance of $S$ is:
	\[\phi(S) = \frac{|\partial S|}{ \min(\operatorname{vol}(S), \operatorname{vol}(V\setminus S))}  ,\]
	and the conductance of $G$ is: 
	\[\phi(G) = \min_{S \subset V,\,0<\operatorname{vol}(S)\le \operatorname{vol}(V)/2} \phi(S).\]
	Cheeger's inequality relates the spectral gap of the normalized Laplacian to the conductance:
	\[ \frac{\phi(G)^2}{2} \le \lambda_2(\mathcal{L}) \le 2\phi(G).\]
	
	\section{Graph Rewiring for Oversquashing via Conductance}
	
	\subsection{Problem Formulation}
	
	The worst-case bottleneck is captured by the conductance $\phi(G)$. Maximizing $\phi(G)$ minimizes oversquashing:
	\[
	\mathcal{L}_{\text{squash}}^{\text{(cond)}}(G) = -\phi(G).
	\]
	
	\begin{definition}[Graph Rewiring for Oversquashing via Conductance (GROC)] \label{def:groc}
		Given an undirected graph $G = (V, E)$, an integer $K \geq 0$, and a threshold $\phi_0 \in [0,1]$, does there exist a graph $G' = (V, E')$ with $|E' \triangle E| \leq K$ such that the conductance $\phi(G') \geq \phi_0$?
	\end{definition}
	
	\subsection{Expander Embedding}
	
	We need a construction that maps a graph $H$ to a new graph $G$ such that the conductance of $G$ is small if and only if $H$ has a small bisection. The following lemma provides such a construction; its proof is standard in hardness of approximation \cite{arora2009expander, hoory2006expander}.
	
	\begin{lemma}[Expander Embedding] \label{lem:expander}
		There exists a polynomial‑time algorithm that, given a graph $H = (V, E_H)$ with $n$ vertices (assumed large enough, $n \ge n_0$), outputs a $3$-regular graph $G = (V', E_G)$ with $|V'| = 2n$ and constants $c_1, c_2 > 0$, $c_3 \in (0,1)$ satisfying:
		\begin{enumerate}
			\item $V \subset V'$ and $H$ is an induced subgraph of $G$ on $V$.
			\item For any subset $S \subseteq V$ with $|S| \le n/2$, let $U = V' \setminus V$ (so $|U| = n$). Then the conductance of the cut $(S \cup U, V' \setminus (S \cup U))$ in $G$ satisfies
			\[
			\phi_G(S \cup U) \le c_1 \cdot \frac{|\delta_H(S)| + n}{n}.
			\]
			\item Conversely, if there exists a cut $(X, V'\setminus X)$ in $G$ with conductance $\phi_G(X) \le \epsilon$, then there exists a subset $S \subseteq V$ such that
			\[
			|\delta_H(S)| \le c_2 \cdot \epsilon n.
			\]
			Moreover, if $\epsilon$ is sufficiently small, $S$ can be chosen to satisfy $|S| = n/2$ after a polynomial‑time balancing procedure, and the number of crossing edges increases by at most a factor $c_3$.
		\end{enumerate}
		The constants $c_1, c_2, c_3$ can be made arbitrarily small (or large) by choosing a sufficiently strong (or weak) expander.
	\end{lemma}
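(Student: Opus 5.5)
The plan is to build $G$ explicitly by attaching a constant-degree expander on a fresh vertex set $U$ to $H$, and then verify the three items by direct edge counting. One caveat comes first. Item~1 demands that $H$ be an \emph{induced} subgraph of a $3$-regular graph, so $H$ must have maximum degree at most $3$. I will therefore assume this, which is harmless for the intended application because Minimum Bisection remains NP-hard on graphs of maximum degree $3$. I will also read the final sentence of the lemma only as asserting the existence of \emph{some} absolute constants $c_1,c_2,c_3$, since that is all the reductions downstream need.

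\textbf{Construction.} Let $U$ be a set of $n$ new vertices and fix on $U$ an explicit bounded-degree expander $X_U$, computable in polynomial time, with edge expansion at least some absolute constant $h>0$ (for instance a Margulis--Gabber--Galil type family, pruned to degree at most $2$ where needed). Each $v\in V$ has deficiency $r_v = 3-\deg_H(v)\in\{0,1,2,3\}$. Connect $v$ to $r_v$ distinct vertices of $U$, spreading these edges so that every $u\in U$ receives at most one or two of them. Then delete or add a few edges inside $X_U$, using the parity freedom available when $n\ge n_0$, so that every vertex of $U$ has degree exactly $3$. Because $X_U$ is an expander, deleting a perfect or near-perfect matching only lowers its expansion by a constant factor, so $U$ still induces an $h'$-expander. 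Item~1 holds by construction, and $|V'|=2n$.

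\textbf{Item 2.} This is a direct count. The edges leaving $S\cup U$ are the edges of $\delta_H(S)$ together with the edges between $U$ and $V\setminus S$, and there are at most $3n$ of the latter. The side $V\setminus S$ has at least $n/2$ vertices, so both sides have volume at least $3n/2$. Hence
\[
\phi_G(S\cup U)\le \frac{|\delta_H(S)|+3n}{3n/2}\le 2\cdot\frac{|\delta_H(S)|+n}{n},
\]
which gives item~2 with $c_1=2$.

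\textbf{Item 3 (the main obstacle).} Take a cut $X$ with $\phi_G(X)\le\epsilon$, and replace $X$ by its complement if necessary so that $|X\cap U|\le n/2$.
\begin{enumerate}
\item Apply the expansion of $U$: at least $h'|X\cap U|$ edges inside $U$ cross the cut. Together with the bound $|\partial X|\le 3\epsilon\min(|X|,|V'\setminus X|)$, this gives $|X\cap U|\le (3\epsilon/h')\,|X|$, so $X$ lies almost entirely inside $V$.
\item Set $S=X\cap V$. Then $\delta_H(S)\subseteq \partial X\cup\{\text{edges touching }X\cap U\}$, which yields $|\delta_H(S)|\le |\partial X| + 3|X\cap U| = O(\epsilon n)$. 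This gives $c_2$.
\item Balance $S$ by moving vertices one at a time across the cut until $|S|=n/2$. Each move changes $|\delta_H(S)|$ by at most $3$.
\end{enumerate}
The delicate point is step~3. When $|S|$ is far from $n/2$, the balancing cost is not automatically $O(\epsilon n)$. Small conductance only bounds the cut relative to $\min(|X|,|V'\setminus X|)$, so a tiny $X$ gives no information about a balanced bisection. My first attempt at this step would be to iterate: while $|S|<n/2$, apply the same argument to the residual graph to extract further sparse pieces, and union them. This is the standard peeling argument for balanced separators, and it loses only a constant factor, which I would absorb into $c_3$. If the peeling cannot be made to work cleanly, I would weaken the ``moreover'' clause to the statement that is actually used in the reduction, namely: if $\epsilon\, n$ is below the minimum bisection width divided by a constant, then $S$ can be balanced at constant-factor cost.
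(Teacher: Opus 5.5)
Your step~3 is not merely delicate. It cannot be carried out, because the ``moreover'' clause is false; the paper itself gives no argument, only a pointer to standard techniques.

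First note what the rest of your argument actually establishes. With $c_1=2$, item~2 is vacuous, since every cut has conductance at most $1$. The unbalanced part of item~3 is vacuous as well: $S=\emptyset$ satisfies it, and $S=X\cap V$ gives $c_2=3$ directly from $\delta_H(S)\subseteq\partial X$, with no use of expansion in $U$. So all the content is in the balancing, and it fails inside the very class you restrict to.

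Let $H$ be a connected $3$-regular expander on $n$ vertices. Every $v\in V$ already has degree $3$ in $H$, so any $3$-regular $G$ containing $H$ as an induced subgraph has no $U$--$V$ edges, and the cut $X=U$ has $\phi_G(U)=0$. Taking $\epsilon=0$ (or any $\epsilon<1/(c_2c_3n)$), the clause would give a bisection of $H$ of width $0$. Yet every bisection of $H$ has width $\Omega(n)$.

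Your fallback fails on the same example. With $\mathrm{bw}(H)$ the bisection width, $\epsilon n=0$ lies below $\mathrm{bw}(H)/C$, and the fallback again asks for a width-$0$ bisection. In effect it asserts $\phi(G)\ge \mathrm{bw}(H)/(Cn)$, which is exactly what the reverse directions of Theorems~\ref{thm:groc} and~\ref{thm:gros} need, and it is false.

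Requiring $G$ to be connected does not rescue it. Take a set $P$ of about $\sqrt n$ vertices, all of $H$-degree $3$, attached to an expander by a single edge. Then $\phi_G(P)=1/(3|P|)=O(n^{-1/2})$, while the bisection width stays $\Omega(n)$.

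The obstruction is conceptual. Conductance minimizes over arbitrarily unbalanced cuts, while bisection width only counts balanced ones. Peeling yields a balanced sparse cut only if every large residual subgraph again has a sparse cut. Your hypothesis supplies one sparse cut of $G$ and says nothing about what remains after removing it.

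There are also two secondary problems.

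\emph{The construction on $U$ does not give an expander.} If the $V$-edges are spread as you intend and every $u\in U$ receives one, then $G[U]$ has maximum degree $2$. It is then a union of paths and cycles with expansion $O(1/n)$; deleting a perfect matching from a $3$-regular expander leaves a $2$-factor, not an $h'$-expander. Moreover, when $|E_H|<n/2$ the total deficiency $3n-2|E_H|$ exceeds $2n$, so capping the $V$-edges at two per $u$ is impossible.

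\emph{The existential reading of the last sentence is not ``all the reductions need''.} They use $c_1<1/6$ (resp.\ $c_1<1/48$) and $c_2c_3<1/2$, and your $c_1=2$ is far off. No construction can achieve $c_1<1$ in general: for edgeless $H$, every edge at $U$ goes to $V$, so $\phi_G(U)=1$, and item~2 with $S=\emptyset$ forces $c_1\ge 1$.

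Your max-degree-$3$ caveat for item~1 is correct. It also conflicts with the universal vertices that Lemma~\ref{lem:scaling} adds before the embedding.
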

	
	We fix one such expander once and for all. Its constants $c_1, c_2, c_3$ are absolute numbers that we will use throughout. In the reductions we will need certain inequalities to hold; these can be satisfied by choosing the expander appropriately (e.g., a very strong expander makes $c_1$ and $c_2$ very small, and $c_3$ close to $1$). We assume the expander is chosen so that:
	\[
	c_1 < \frac{1}{6}, \qquad c_2 c_3 < \frac{1}{2}, \qquad c_3 < 1.
	\]
	These are achievable by taking a sufficiently strong expander.
	
	\subsection{Pre‑processing Minimum Bisection Instances}
	
	Before applying the embedding, we transform the Minimum Bisection instance to have convenient bounds on $B$ relative to $n$. The following lemma provides explicit polynomial-time constructions.
	
	\begin{lemma}[Instance Scaling] \label{lem:scaling}
		Let $(H,B)$ be a Minimum Bisection instance with $n$ even.
		\begin{enumerate}
			\item \label{it:large} There exists a polynomial-time construction of an equivalent instance $(H_1,B_1)$ with $n_1$ vertices such that $B_1 \ge 2n_1$ and $B_1 = O(n_1)$.
			\item \label{it:between} There exists a polynomial-time construction of an equivalent instance $(H_2,B_2)$ with $n_2$ vertices such that $n_2/2 \le B_2 \le 2n_2$.
		\end{enumerate}
		In both cases, the construction preserves the answer (i.e., $H$ has a bisection with $\le B$ edges iff the new instance does).
	\end{lemma}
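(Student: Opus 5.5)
We need two padding constructions that change $B$ relative to $n$ while preserving the answer to Minimum Bisection. The idea is to attach rigid gadgets whose contribution to any optimal bisection is known exactly, so the bisection bound shifts by a controlled amount. We first normalize trivial cases. If $B \ge n^2/4$ the answer is yes, and if $B<0$ it is no; in either case we output a fixed small yes or no instance of the required shape. So assume $0 \le B < n^2/4$.

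\textbf{Part \ref{it:large} (large $B$).} We increase $B$ by a known additive amount without changing which bisections are optimal. Take $H_1 = H \cup M$, a disjoint union, where $M$ consists of $t$ disjoint copies of a gadget $Q$. The gadget $Q$ is the complete bipartite graph $K_{q,q}$ with its two sides additionally made into cliques, so $Q$ is a clique $K_{2q}$ with $q$ even. Then $Q$ is bisected internally at cost exactly $q^2$, and any split of $Q$ other than all-in-one-side or exactly half costs at least a comparable amount.

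A cleaner alternative is a single large clique $K_{2N}$ added disjointly, with $N$ chosen large. Any bisection of $H\cup K_{2N}$ either splits the clique evenly or unevenly. An uneven split of the clique forces an imbalance that $H$ must absorb, but $H$ has only $n$ vertices. So the clique is split $N+j$ versus $N-j$ with $|j|\le n/2$, at cost $N^2-j^2 \ge N^2 - n^2/4$.

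The difficulty is that such a split lets $H$ be partitioned unevenly, which breaks equivalence. To fix this, we add two cliques $K_N$ and $K_N$ in place of $K_{2N}$, each with $N \gg n^2$. Each must lie entirely on one side, since splitting one costs at least $N-1 > n^2$. If both cliques sit on the same side, there are no vertices left for the other side, which is impossible once $N>n$. So one clique goes on each side, and $H$ must itself be bisected. The optimal cost is then exactly the bisection cost of $H$.

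This preserves the answer but gives no extra cost. To add cost, we join the two cliques by a perfect matching of $N$ edges. Every such bisection now pays exactly $N$ extra, so we set $B_1=B+N$. The graph has $n_1=n+2N$ vertices, and we need $B+N \ge 2n_1 = 2n+4N$, which fails. We therefore replace the matching by a $d$-regular bipartite connection between the cliques with $d=5$. The extra cost becomes $5N$, so $B_1 = B+5N \ge 2(n+2N)$ whenever $N\ge 2n$, and $B_1 \le n^2+5N = O(n_1)$ for $N \ge n^2$. The cliques must still not be split. Splitting a clique of size $N$ unevenly costs on the order of $N$ times the minority size, which dominates the savings on the cross edges, at most $5$ per moved vertex. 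This is the step to verify carefully.

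\textbf{Part \ref{it:between}.} We use the same two-clique frame, joined by a $d$-regular bipartite graph. Here we take $d=1$, so $B_2=B+N$ and $n_2 = n+2N$, which makes $B_2/n_2$ close to $1/2$. We choose $N\ge n^2$ so that $B\le n^2/4$ is negligible. This gives $n_2/2 \le B+N$ provided $N \ge n$, and $B+N\le 2n_2$ trivially.

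\textbf{Main obstacle.} The key lemma is that in an optimal bisection each clique lies wholly on one side. We prove it by exchange: moving $k\le N/2$ vertices of a clique across costs at least $k(N-k)\ge kN/2$ internal edges. It saves at most $kd + kn$ edges (cross edges plus edges into $H$), and this is less than $kN/2$ when $N > 2(d+n)$. Rebalancing is done by swapping vertices of $H$, so equality of sides is maintained. Equivalence then follows in both directions: a bisection of $H$ of cost $\le B$ extends to one of cost $\le B+dN$, and conversely a bisection of the new graph of cost $\le B+dN$ has the clique structure above and restricts to a bisection of $H$ of cost $\le B$.
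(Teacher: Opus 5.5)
\textbf{Part~2 fails as written.} With $d=1$ you get $B_2=B+N$ and $n_2=n+2N$, so $B_2-n_2/2=B-n/2$ regardless of $N$. The required bound $n_2/2\le B_2$ therefore holds only if $B\ge n/2$, and it does not follow from $N\ge n$. Any instance with $B<n/2$, e.g.\ $B=0$, is a counterexample. In exactly that case $B_2/n_2$ approaches $1/2$ from below, so ``close to $1/2$'' is not enough. Take $d=2$ instead. Then $B_2=B+2N\ge N+n/2=n_2/2$ as soon as $N\ge n/2$, and $B_2\le 2n_2$ since $B<n^2/4\le N$.

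\textbf{The exchange argument does not prove the key lemma.} After moving the $k$ minority vertices of a clique across, you rebalance by moving $k$ vertices of $H$ the other way. But that side may hold fewer than $k$ vertices of $H$. This is necessarily so if $k>n$, and $k$ can be as large as $N/2\gg n$. Vertices of the other clique must then move, at a cost your accounting ignores. You also never treat the configuration in which both cliques have their majority on the same side.

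\textbf{A direct count fixes this.} Let $k_1,k_2\le N/2$ be the two minority sizes.
\begin{itemize}
\item If the majorities lie on opposite sides, every connecting edge between two majority vertices crosses. So at most $d(k_1+k_2)$ of the $dN$ connecting edges are uncut, and the cost is at least $dN+(k_1+k_2)(N/2-d)$. This exceeds $B+dN$ whenever $k_1+k_2\ge 1$ and $N>2(B+d)$.
\item If the majorities lie on the same side, balance forces $k_1+k_2\ge N-n/2$. The clique edges alone then contribute at least $(N-n/2)N/2\ge N^2/4>B+dN$.
\end{itemize}
Both conditions hold for, e.g., $N\ge n^2+4d+2$. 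Hence every bisection of cost at most $B+dN$ keeps both cliques whole and on opposite sides. This puts exactly $n/2$ vertices of $H$ on each side and gives the equivalence in both parts.

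With these two repairs your construction is a complete proof. The paper itself gives no proof, only a one-line remark about padding with isolated and universal vertices. Isolated vertices alone would let $H$ be split unevenly, so a rigid frame like yours is the right kind of gadget.
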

	
	The proof is by adding isolated vertices and universal vertices; details are omitted for brevity but follow standard techniques.
	
	\subsection{NP-Completeness of Conductance Maximization}
	
	\begin{theorem} \label{thm:groc}
		GROC is NP-complete.
	\end{theorem}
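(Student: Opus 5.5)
The plan is to prove the two halves of NP-completeness separately: membership in NP, then NP-hardness by a reduction from Minimum Bisection that runs through Lemma~\ref{lem:scaling} and Lemma~\ref{lem:expander}.

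\textbf{Membership in NP.} The certificate is the edge set $E'$ with $|E'\triangle E|\le K$; checking the budget is trivial. Verifying $\phi(G')\ge\phi_0$ is the delicate part, since computing conductance exactly is itself hard. I would not claim a general polynomial verifier. Instead I would use the spectral side of Cheeger's inequality: the certificate also carries $\lambda_2(\mathcal{L}_{G'})$, which can be computed to polynomial precision. Acceptance is declared when $\lambda_2(\mathcal{L}_{G'})/2\ge\phi_0$, which guarantees $\phi(G')\ge\phi_0$. For the threshold regime produced by the reduction below, $\phi_0$ is a constant bounded away from the Cheeger gap. I would then argue that whenever a rewiring achieving $\phi_0$ exists, a (possibly different) rewiring within the same budget achieving the spectral certificate also exists. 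This is exactly where I expect the main obstacle. If the argument fails, the honest fallback is to establish membership for the restricted family of instances output by the reduction, where the expander structure pins down the sparsest cuts to the form $S\cup U$.

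\textbf{Hardness.} Given $(H,B)$, first apply item~\ref{it:between} of Lemma~\ref{lem:scaling} to obtain $n/2\le B\le 2n$. Then apply Lemma~\ref{lem:expander} to get the $3$-regular $G$ on $V'=V\cup U$. Output the GROC instance
\[
\bigl(G,\;K=B,\;\phi_0=\theta\bigr),
\]
where $\theta$ sits strictly between the two conductance levels supplied by the lemma. The candidate choice is $\theta=c_1(2B+n)/n$ together with its counterpart coming from item~3 divided by $c_2c_3$, using $c_1<1/6$ and $c_2c_3<1/2$ to guarantee that these two levels are separated.

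\textbf{Key steps, in order.}
\begin{enumerate}
\item If $H$ has a bisection $S$ with $|\delta_H(S)|\le B$, exhibit a rewiring of at most $B$ edges that removes the bottleneck at $S\cup U$. Concretely, the $\delta_H(S)$ edges are swapped for edges across the unique sparse direction, so that every cut of the form $S'\cup U$ acquires enough crossing edges. Item~2 of Lemma~\ref{lem:expander} bounds the remaining cuts, and expansion handles all cuts not of this form.
\item Conversely, suppose some $G'$ with $|E'\triangle E_G|\le B$ has $\phi(G')\ge\theta$. Use item~3 of Lemma~\ref{lem:expander}: if every bisection of $H$ had more than $B$ edges, then even after $B$ edits some cut near $S\cup U$ keeps conductance below $\theta$. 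This uses the balancing step with factor $c_3$ and the constants inequality, and contradicts $\phi(G')\ge\theta$.
\item Check that the construction, and the threshold as a rational number, are polynomial in size.
\end{enumerate}

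The delicate accounting is in step~2. One must show that $K$ edits can shift the conductance of an $S\cup U$ cut by at most $O(K/n)$, so that the gap between the two conductance levels survives the edits. This is where $B=\Theta(n)$, obtained from Lemma~\ref{lem:scaling}, is used.
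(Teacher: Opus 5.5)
\textbf{Hardness: your reduction points the wrong way.} Your steps 1 and 2 assert the opposite of what your two conductance levels give. You place $\theta$ strictly below the level you extract from item 3 of Lemma~\ref{lem:expander}. Item 3 with balancing (the same argument as the paper's reverse direction) says exactly that if $H$ has no bisection of width $\le B$, then $G$ has no cut of conductance below that level. Hence $\phi(G)>\theta$, and $G'=G$ with zero edits is a witness. So a NO instance of Minimum Bisection is sent to a YES instance of GROC for every budget $K\ge 0$, and step 2 cannot hold.

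Symmetrically, the accounting you plan for step 2 ($B$ edits move the conductance of $S\cup U$ by only $O(B/n)$, so the gap survives) is really an argument about the case where $H$ does have a bisection of width $\le B$. There it shows that the sparse cut from item 2 persists in every budget-$B$ rewiring, which makes the GROC instance a NO instance and negates step 1. The ``swap'' in step 1 has no support anyway: the lemma speaks only about cuts of $G$, not of an edited $G'$, and supplies no ``unique sparse direction.'' The root issue is monotonicity. A small bisection creates a bottleneck, so it can only make a conductance target harder to reach, and raising the budget to $K=B$ does not change that.

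This is why the paper uses $K=0$ and $\phi_0=1-c_1(B+n)/n$, proves YES $\mapsto$ NO from item 2 and NO $\mapsto$ YES from item 3, and then inverts the answer. Note, however, that this is a many-one reduction from Minimum Bisection to the \emph{complement} of GROC, so it gives coNP-hardness (NP-hardness only under Turing reductions). Nor should you copy that reduction. Its threshold satisfies $\phi_0\ge 1-3c_1>1/2$, whereas averaging over uniformly random balanced cuts gives $\phi(G)\le n/(2n-1)$ for every $3$-regular $G$ on $2n$ vertices. So for large $n$ it outputs a NO instance whatever $H$ is.

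\textbf{Membership: the obstacle you flag is fatal, not technical.} Your spectral test is sound (since $\lambda_2\le 2\phi$) but not complete, and no exchange argument can make it complete. An instance of GROC with $K=0$ asks exactly whether $\phi(G)\ge\phi_0$. The complementary question (is there $S$ with $\phi(S)<\phi_0$?) is the NP-complete conductance decision problem. So GROC is coNP-hard, and GROC $\in$ NP would force NP $=$ coNP. Concretely, with $K=0$ there is nothing to exchange, and on a cycle $\phi$ is of order $1/N$ while $\lambda_2/2$ is of order $1/N^2$. Your fallback does not help either, because membership in NP is a property of the whole language, not of the image of one reduction.

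The paper's membership step fails too. A min-cut/max-flow computation minimizes $|\partial S|$, not the ratio $|\partial S|/\min(\operatorname{vol}(S),\operatorname{vol}(V\setminus S))$, so it does not compute $\phi(G')$. Unless NP $=$ coNP, GROC is therefore not in NP, and the theorem cannot hold as stated. What is true along these lines is that GROC is coNP-hard already at $K=0$ and lies in $\Sigma_2^{P}$ (guess $E'$, then check all cuts).
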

	
	\begin{proof}
		\textbf{Membership in NP:} Given a candidate graph $G'$, we can compute its conductance $\phi(G')$ exactly by solving a min-cut problem (via max-flow) in polynomial time. Hence the decision problem is in NP.
		
		\textbf{NP-hardness:} We reduce from Minimum Bisection. Let $(H, B)$ be an instance with $n$ even. First, if $n$ is bounded by a constant, we solve the instance directly in constant time and output a trivial YES/NO instance of GROC. Otherwise, we use part \ref{it:between} of Lemma~\ref{lem:scaling} to transform $(H,B)$ into an equivalent instance $(H_0,B_0)$ with $n_0$ vertices such that $n_0/2 \le B_0 \le 2n_0$. For simplicity, rename $(H_0,B_0)$ as $(H,B)$.
		
		Now construct the $3$-regular graph $G$ from Lemma~\ref{lem:expander}. The constants satisfy $c_1 < 1/6$, $c_2c_3 < 1/2$, $c_3 < 1$. Set $K = 0$ (no edge modifications) and define
		\[
		\phi_0 = 1 - c_1 \cdot \frac{B + n}{n}.
		\]
		Because $B \le 2n$, we have $(B+n)/n \le 3$, so $\phi_0 \ge 1 - 3c_1$. With $c_1 < 1/6$, we get $\phi_0 > 1/2$.
		
		We claim that $H$ has a bisection with $\le B$ edges if and only if $\phi(G) < \phi_0$ (i.e., $(G,0,\phi_0)$ is a NO instance). The reduction then maps YES instances of Minimum Bisection to NO instances of GROC and vice versa; since GROC is a decision problem, we can invert the answer in polynomial time.
		
		\emph{Forward direction:} Suppose $H$ has a bisection $S$ with $|\delta_H(S)| \le B$. By Lemma~\ref{lem:expander}(2), the cut $X = S \cup U$ (where $U = V'\setminus V$) satisfies
		\[
		\phi_G(X) \le c_1 \cdot \frac{|\delta_H(S)| + n}{n} \le c_1 \cdot \frac{B + n}{n} = 1 - \phi_0.
		\]
		Since $\phi_0 > 1/2$, we have $1-\phi_0 < \phi_0$, and therefore $\phi(G) \le \phi_G(X) \le 1-\phi_0 < \phi_0$. Thus $\phi(G) < \phi_0$, so $(G,0,\phi_0)$ is a NO instance.
		
		\emph{Reverse direction:} Suppose $H$ has no bisection with $\le B$ edges. Assume for contradiction that $\phi(G) < \phi_0$. Then there exists a cut $X$ in $G$ with $\phi_G(X) < \phi_0$. By Lemma~\ref{lem:expander}(3), there exists $S \subseteq V$ such that
		\[
		|\delta_H(S)| \le c_2 \cdot \phi_G(X) \cdot n < c_2 \phi_0 n.
		\]
		Applying the balancing procedure, we obtain a bisection $S'$ with
		\[
		|\delta_H(S')| \le c_3 \cdot |\delta_H(S)| < c_3 c_2 \phi_0 n.
		\]
		Because $c_2 c_3 < 1/2$ and $\phi_0 < 1$, we have $c_3 c_2 \phi_0 n < n/2$. Since $B \ge n/2$, we get $c_3 c_2 \phi_0 n < B$. Hence $|\delta_H(S')| < B$, contradicting the assumption that every bisection has width at least $B+1$. Therefore our assumption $\phi(G) < \phi_0$ is false, so $\phi(G) \ge \phi_0$, and $(G,0,\phi_0)$ is a YES instance.
		
		Thus the reduction is correct. Since Minimum Bisection is NP-complete \cite{garey1976some}, GROC is NP-hard, and with membership in NP it is NP-complete.
	\end{proof}
	
	\section{Graph Rewiring for Oversmoothing}
	
	\subsection{Spectral Formulation}
	
	For a regular graph, the symmetric normalized propagation matrix $P = D^{-1/2}AD^{-1/2}$ has eigenvalues $1 = \mu_1(P) \geq \mu_2(P) \geq \cdots \geq \mu_n(P) \geq -1$. The Dirichlet energy after $L$ layers decays as $\mathcal{E}(X^{(L)}) \leq (\mu_2(P))^{2L} \mathcal{E}(X^{(0)})$. Thus, to prevent oversmoothing, we wish to minimize $\mu_2(P)$. Since $\mu_2(P) = 1 - \lambda_2(\mathcal{L})$, minimizing $\mu_2(P)$ is equivalent to maximizing $\lambda_2(\mathcal{L})$.
	
	\begin{definition}[Graph Rewiring for Oversmoothing (GROS)] \label{def:gros}
		Given an undirected graph $G = (V, E)$, an integer $K \geq 0$, and a threshold $\tau \in [0,1]$, does there exist a graph $G' = (V, E')$ with $|E' \triangle E| \leq K$ such that $\mu_2(P_{G'}) \leq \tau$, where $P_{G'}$ is the symmetric normalized propagation matrix?
	\end{definition}
	
	\subsection{NP-Completeness of Oversmoothing Minimization}
	
	We use the same expander embedding as in the previous section, with constants chosen to satisfy $c_1 < 1/48$, $2c_2\sqrt{3c_1} < 1/2$, $c_3 < 1$. (These are compatible with the earlier conditions and can be achieved by a sufficiently strong expander.)
	
	\begin{theorem} \label{thm:gros}
		GROS is NP-complete.
	\end{theorem}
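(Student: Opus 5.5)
Membership in NP and NP-hardness are handled separately, with hardness proved by a reduction from Minimum Bisection that mirrors the proof of Theorem~\ref{thm:groc}, using Cheeger's inequality to move between conductance and spectral gap. For membership, a certificate is the edge set $E'$ with $|E'\triangle E|\le K$. Given $E'$, we form $P_{G'}$ and compute its eigenvalues to any required precision in polynomial time. Since $P_{G'}$ is a rational symmetric matrix, the test $\mu_2(P_{G'})\le\tau$ can be decided exactly for rational $\tau$ by Sturm sequences or signature computations on the characteristic polynomial of $P_{G'}$. Equivalently, we check the inertia of $\tau I - P_{G'}$ restricted to the orthogonal complement of the top eigenvector, which is known explicitly as $\tilde D^{1/2}\mathbf{1}$.

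For hardness, take a Minimum Bisection instance $(H,B)$. Dispose of instances of constant size directly. Otherwise apply Lemma~\ref{lem:scaling}(\ref{it:between}) to obtain $n/2\le B\le 2n$, and build the $3$-regular graph $G$ of Lemma~\ref{lem:expander}. Set $K=0$, so the question is only whether $\mu_2(P_G)\le\tau$. Because $G$ is regular, we can use $\mu_2(P)=1-\lambda_2(\mathcal L)$, so the condition is $\lambda_2(\mathcal L_G)\ge 1-\tau$. We choose $1-\tau$ to sit between the two Cheeger bounds:
\[
\lambda_0 := 1-\tau = 2c_1\cdot\frac{B+n}{n}\le 6c_1 .
\]
This is admissible because $\tau\in[0,1]$ when $c_1<1/48$. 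We claim that $H$ has a bisection of width $\le B$ if and only if $\lambda_2(\mathcal L_G)$ is at most $\lambda_0$. We then invert the answer, as in Theorem~\ref{thm:groc}.

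\emph{Forward direction.} By Lemma~\ref{lem:expander}(2), a bisection $S$ of width $\le B$ gives $\phi(G)\le c_1(B+n)/n$. The upper Cheeger bound $\lambda_2\le 2\phi(G)$ then yields $\lambda_2(\mathcal L_G)\le\lambda_0$.

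\emph{Reverse direction.} Suppose $\lambda_2(\mathcal L_G)\le\lambda_0$. The lower Cheeger bound gives $\phi(G)\le\sqrt{2\lambda_0}\le\sqrt{12c_1}=2\sqrt{3c_1}$. Lemma~\ref{lem:expander}(3), followed by the balancing step, then produces a bisection $S'$ with
\[
|\delta_H(S')|\le c_3c_2\cdot 2\sqrt{3c_1}\,n< c_2\cdot 2\sqrt{3c_1}\,n<n/2\le B,
\]
using $c_3<1$ and the stated condition $2c_2\sqrt{3c_1}<1/2$. So a small spectral gap forces a narrow bisection, and the equivalence holds.

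The main obstacle is a boundary-case mismatch. The forward direction yields a non-strict inequality $\lambda_2\le\lambda_0$, while the YES/NO split of GROS needs strictness. One remedy is to set $\tau$ slightly below the critical value, for instance $\lambda_0=2c_1(B+n)/n+\eta$ for a small rational $\eta$; the reverse inequality has slack ($<n/2\le B$), so this perturbation is absorbed. A second issue is making the constants consistent, in particular checking that $\tau\in[0,1]$ and that the lower Cheeger bound is applied with $\phi(G)\le 1$. I would check this bookkeeping carefully, since the whole reduction rests on the gap between $2\phi$ and $\phi^2/2$ being bridged by the stated constant conditions.
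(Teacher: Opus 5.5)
Your outline follows the paper's proof closely: the embedding of Lemma~\ref{lem:expander}, $K=0$, a threshold placed between the two Cheeger bounds, a small rational shift to break the tie at equality, and inversion of the answer. Your use of the regime $n/2\le B\le 2n$ is cleaner than the paper's $B\ge 2n$. It verifies $\tau\in[0,1]$, which the paper's bound $B=O(n)$ leaves unchecked, and it yields the constant $2c_2\sqrt{3c_1}$ directly.

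But the argument, yours and the paper's, has a fatal gap. Apply Lemma~\ref{lem:expander}(2), on which your forward direction relies, to $S=\emptyset$. It gives $\phi_G(U)\le c_1$ for \emph{every} input $H$, hence $\lambda_2(\mathcal{L}_G)\le 2c_1<3c_1\le\lambda_0$ always. So every instance your construction outputs is a NO instance of GROS, whether or not $H$ has a narrow bisection. The same happens in the paper's proof, where $\mu_2(P_G)\ge 1-2c_1>\tau$ for every input. Since the answer does not depend on $H$, the map is not a reduction, and your reverse direction must fail on every NO instance of Minimum Bisection.

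The only substantive ingredient of that direction is the balancing clause of Lemma~\ref{lem:expander}(3). The first clause of (3) is met vacuously by $S=\emptyset$, because $|\delta_H(\emptyset)|=0$. So everything rests on the claim that a cut of $G$ with conductance at most $\epsilon$ yields a bisection of $H$ of width at most $c_3c_2\epsilon n$, and $U$ supplies such a cut for every $H$. That claim is false. Read literally, it allows at most $c_3<1$ times the crossing edges of the $S$ from the first clause; starting from $S=\emptyset$ it would give every $H$ a bisection of width $0$. Nothing in your proposal or in the paper supports it.

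There is also an obstruction that no repair of the lemma or of the constants can remove. With $K=0$ the instance only asks whether $\mu_2(P_G)\le\tau$ for one fixed graph, and your own verifier decides this in polynomial time. (Strictly, $P_{G'}$ is not rational when $G'$ is irregular, but $\tilde A-\tau\tilde D=\tilde D^{1/2}(P_{G'}-\tau I)\tilde D^{1/2}$ is. By Sylvester's law of inertia, $\mu_2(P_{G'})\le\tau$ iff this rational matrix has at most one positive eigenvalue.) For any constant $K$ one can simply try all $O(n^{2K})$ edit sets. Hence any reduction of this shape would put Minimum Bisection in P.

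A genuine hardness proof must let the budget $K$ grow with the input, so that the difficulty lies in choosing which edges to add or delete. It must also map YES instances to YES instances. Under the standard many-one notion, inverting the answer only shows that the complement of GROS is NP-hard, i.e.\ that GROS is coNP-hard. Together with membership in NP, that would give $\mathsf{NP}=\mathsf{coNP}$ rather than NP-completeness.
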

	
	\begin{proof}
		\textbf{Membership in NP:} A certificate is the target graph $G'$. We verify $|E' \triangle E| \le K$ and $\mu_2(P_{G'}) \le \tau$. The matrix $P_{G'}$ has rational entries; its characteristic polynomial has integer coefficients. The eigenvalues are algebraic numbers whose bit‑length is polynomial in the input size. Using standard root isolation (e.g., Sturm sequences) we can decide whether the second largest eigenvalue (in absolute value) is at most a given rational $\tau$ in polynomial time \cite{cohen1993computing}. Hence the problem is in NP.
		
		\textbf{NP-hardness:} We reduce from Minimum Bisection. Let $(H, B)$ be an instance with $n$ even. Use part \ref{it:large} of Lemma~\ref{lem:scaling} to transform it into an equivalent instance $(H_0, B_0)$ with $n_0$ vertices such that $B_0 \ge 2n_0$ and $B_0 = O(n_0)$. (If the instance is small, we solve it directly in polynomial time; otherwise we proceed.) For simplicity, rename $(H_0, B_0)$ as $(H, B)$.
		
		Now construct the $3$-regular graph $G$ from Lemma~\ref{lem:expander} (with the fixed expander satisfying the constant conditions). Set $K = 0$ and define
		\[
		\tau = 1 - 2c_1 \cdot \frac{B + n}{n} - \varepsilon,
		\]
		where $\varepsilon$ is a small positive rational chosen so that $\tau$ is rational and $\varepsilon < 1 - 2c_1(B+n)/n$ (which holds because the right‑hand side is positive; we can take $\varepsilon = \frac{1}{2}(1 - 2c_1(B+n)/n)$ after scaling). For the forward direction we will need that if $H$ has a bisection $\le B$, then $\mu_2(P_G) > \tau$; for the reverse direction, if $H$ has no such bisection, then $\mu_2(P_G) \le \tau$.
		
		\emph{Forward direction:} If $H$ has a bisection $S$ with $|\delta_H(S)| \le B$, then by Lemma~\ref{lem:expander}(2) there is a cut in $G$ with conductance $\phi_G(S \cup U) \le c_1(B+n)/n$. Cheeger gives $\lambda_2(\mathcal{L}_G) \le 2c_1(B+n)/n$, so
		\[\mu_2(P_G) = 1 - \lambda_2(\mathcal{L}_G) \ge 1 - 2c_1(B+n)/n.\]
		Since $\tau = 1 - 2c_1(B+n)/n - \varepsilon$ and $\varepsilon > 0$, we have $\mu_2(P_G) > \tau$. Thus $(G,0,\tau)$ is a NO instance.
		
		\emph{Reverse direction:} Suppose $H$ has no bisection with $\le B$ edges. Assume $\mu_2(P_G) > \tau$. Then
		\[ \lambda_2(\mathcal{L}_G) < 1 - \tau = 2c_1(B+n)/n + \varepsilon.\]
		Let $\delta = 2c_1(B+n)/n + \varepsilon$. By Cheeger, there exists a cut $X$ in $G$ with
		\[\phi_G(X) \le \sqrt{2\lambda_2(\mathcal{L}_G)} < \sqrt{2\delta}.\]
		We now bound $\sqrt{2\delta}$. Since $B \ge 2n$, we have $(B+n)/n \le (B+B/2)/n = (3B)/(2n)$. Then
		\[
		\delta \le 2c_1 \cdot \frac{3B}{2n} + \varepsilon = \frac{3c_1 B}{n} + \varepsilon.
		\]
		For sufficiently small $\varepsilon$ (we can choose $\varepsilon \le c_1 B/n$, for instance), we obtain
		\[
		\sqrt{2\delta} \le \sqrt{\frac{6c_1 B}{n} + 2\varepsilon} \le \sqrt{\frac{8c_1 B}{n}} = 2\sqrt{2c_1 B/n}.
		\]
		To simplify, we use the bound $\sqrt{2\delta} \le 2\sqrt{3c_1 B/n}$ (since $6c_1 B/n \le 6c_1 B/n$, and the extra $2\varepsilon$ can be absorbed into the constant by a slightly larger factor). More directly, we set $\epsilon = 2\sqrt{3c_1 B/n}$; then for sufficiently small $\varepsilon$ we have $\phi_G(X) < \epsilon$.
		
		Now apply Lemma~\ref{lem:expander}(3) with $\epsilon = 2\sqrt{3c_1 B/n}$. There exists $S \subseteq V$ such that
		\[
		|\delta_H(S)| \le c_2 \epsilon n = 2c_2\sqrt{3c_1 B n}.
		\]
		Because $2c_2\sqrt{3c_1} < 1/2$, we have
		\[
		|\delta_H(S)| < \frac{1}{2}\sqrt{B n}.
		\]
		Since $B \ge 2n$, we obtain $|\delta_H(S)| < \frac{1}{2}\sqrt{2n^2} = \frac{n}{\sqrt{2}}$. Using $B \ge 2n$, we also have $|\delta_H(S)| < B$ (because $n/\sqrt{2} < 2n$ for $n>0$). Applying the balancing procedure gives a bisection $S'$ with $|\delta_H(S')| \le c_3 |\delta_H(S)| < c_3 B$. Since $c_3 < 1$, we have $|\delta_H(S')| < B$, contradicting the assumption that every bisection has width $\ge B+1$. Therefore $\mu_2(P_G) \le \tau$, making $(G,0,\tau)$ a YES instance.
		
		Thus a polynomial algorithm for GROS would solve Minimum Bisection, proving NP-hardness. Since GROS is in NP, it is NP-complete.
	\end{proof}
	
	\section{Inapproximability and Practical Algorithms}
	
	\subsection{Inapproximability Considerations}
	
	The underlying graph properties we optimize are known to be hard to approximate:
	\begin{itemize}
		\item Conductance $\phi(G)$ cannot be approximated within a factor of $O(\sqrt{\log n})$ unless $\mathsf{P}=\mathsf{NP}$ \cite{arora2009expander}.
		\item The spectral gap $\lambda_2(\mathcal{L})$ cannot be approximated within any constant factor for regular graphs \cite{althofer1999finding}.
	\end{itemize}
	These results suggest that any algorithm that attempts to optimize the graph structure to improve these measures (with a limited budget of edge changes) will also be hard to approximate, though a formal proof for the rewiring problems themselves remains open. Nevertheless, the NP‑hardness results already justify the use of heuristics.
	
	\subsection{Heuristic Algorithms}
	
	Given the hardness results, practical approaches must resort to heuristics. Two widely used methods are SDRF \cite{to2021understanding} and DIGL \cite{gasteiger2019diffusion}. They operate in polynomial time:
	\begin{itemize}
		\item SDRF iteratively removes edges with most negative curvature and adds edges with highest effective resistance. Each iteration computes curvatures (via linear programming) and effective resistances (via matrix inversion or sampling) and runs in $\tilde{O}(n^2)$ time. With $O(n)$ iterations, total complexity is $\tilde{O}(n^3)$.
		\item DIGL uses graph diffusion to add edges based on personalized PageRank, running in $O(n^2 \log n)$.
	\end{itemize}
	Our complexity results justify the use of such heuristics, showing that exact optimization is intractable and that approximation algorithms with guarantees are unlikely to exist.
	
	\section{Discussion}
	
	The analysis presented in this paper reveals a fundamental duality between the two graph optimization problems arising from GNN limitations. Oversmoothing mitigation, which aims to prevent node representations from becoming indistinguishable, requires minimizing the second eigenvalue of the propagation matrix $\mu_2(P)$ — equivalently, maximizing the spectral gap $\lambda_2(\mathcal{L})$ of the normalized Laplacian. In contrast, oversquashing mitigation, which aims to eliminate information bottlenecks, requires maximizing the conductance $\phi(G)$ or minimizing the total effective resistance. For regular graphs, the two objectives are linked through Cheeger’s inequality: $\mu_2(P) = 1 - \lambda_2(\mathcal{L})$ and $\phi(G)$ lies between $\lambda_2(\mathcal{L})/2$ and $\sqrt{2\lambda_2(\mathcal{L})}$. This mathematical relationship reflects a tangible tradeoff: a graph with high algebraic connectivity (good for oversmoothing) tends to have high conductance (good for oversquashing), but the exact relationship is constrained by the spectral gap. Moreover, our NP‑hardness results show that optimizing either objective exactly is intractable in general, reinforcing that practical approaches must rely on approximations or heuristics.
	
	The reductions also highlight the inherent complexity of balancing the two phenomena. While a graph that is highly connected (e.g., a complete graph) simultaneously mitigates oversquashing (no bottlenecks) and oversmoothing (large spectral gap), such graphs are often prohibitively dense for real‑world applications. Sparse graphs, on the other hand, may suffer from both issues. Our complexity results suggest that finding an optimal sparse graph that strikes the right balance is computationally challenging, justifying the use of iterative rewiring methods like SDRF and DIGL, which operate in polynomial time and empirically navigate this tradeoff.
	
	Several directions for future work emerge from our investigation. First, extending the analysis to directed and weighted graphs would broaden the applicability of the results, as many real‑world graphs possess these properties. Second, incorporating task‑specific importance weights — for instance, weighting node pairs according to their relevance for a downstream prediction task — could lead to more practical formulations that align the graph structure with the learning objective. Third, different GNN architectures (e.g., Graph Attention Networks, Gated Graph Neural Networks) employ distinct propagation mechanisms that may alter the relationship between graph structure and the oversmoothing/oversquashing phenomena; analyzing the complexity of optimizing the graph for those architectures is a natural extension. Finally, empirical evaluation of the gap between the theoretical worst‑case complexity and the actual performance of heuristic rewiring algorithms would provide valuable insight into how often the hard instances occur in practice.
	
	\section{Conclusion}
	
	In this work we have presented a theoretical investigation of graph structure optimization aimed at mitigating oversmoothing and oversquashing in deep graph neural networks. We formulated the problems in terms of conductance and spectral gap, and proved that exact optimization for these formulations is NP‑hard via reductions from Minimum Bisection. Our results establish that graph rewiring for GNN improvement is fundamentally intractable in the worst case, justifying the use of heuristic and approximation methods. These findings lay a solid theoretical foundation for understanding the limits of graph structure optimization in the context of GNNs and guide the development of future efficient and principled rewiring techniques.
	
	\bibliographystyle{plain}
	\bibliography{allpapers}

\end{document}